\definecolor{rlgray}{gray}{0.95}
\definecolor{brightgreen}{rgb}{0, 0.9, 0}
\definecolor{lightred}{rgb}{1.0, 0.52, 0.55}
\definecolor{sapphire}{rgb}{0.17, 0.31, 0.67}
\theoremstyle{definition}
\newcommand{\ourmethod}{\textit{REOrder}}
\newcommand{\dirs}{\{\!\rightarrow,\leftarrow,\downarrow,\uparrow\}}
\newtheorem{theorem}{Theorem}
\title{REOrdering Patches Improves Vision Models}
\author{%
  Declan Kutscher\textsuperscript{1} \quad David M. Chan\textsuperscript{2} \quad Yutong Bai\textsuperscript{2} \quad Trevor Darrell\textsuperscript{2} \quad Ritwik Gupta\textsuperscript{2} \\ \\
  \textsuperscript{1}University of Pittsburgh \qquad \textsuperscript{2}University of California, Berkeley\\
}
\begin{document}
\maketitle

\begin{abstract}
Sequence models such as transformers require inputs to be represented as one-dimensional sequences. In vision, this typically involves flattening images using a fixed row-major (raster-scan) order. While full self-attention is permutation-equivariant, modern long-sequence transformers increasingly rely on architectural approximations that break this invariance and introduce sensitivity to patch ordering. We show that patch order significantly affects model performance in such settings, with simple alternatives like column-major or Hilbert curves yielding notable accuracy shifts. Motivated by this, we propose \ourmethod{}, a two-stage framework for discovering task-optimal patch orderings. First, we derive an information-theoretic prior by evaluating the compressibility of various patch sequences. Then, we learn a policy over permutations by optimizing a Plackett-Luce policy using REINFORCE. This approach enables efficient learning in a combinatorial permutation space. \ourmethod{} improves top-1 accuracy over row-major ordering on ImageNet-1K by up to $3.01\%$ and Functional Map of the World by $13.35\%$.
\end{abstract}

\section{Introduction}
\label{sec:introduction}

Autoregressive sequence models have become the backbone of leading systems in both language and vision. These models operate on images by first converting their 2-D grid structure into a 1-D sequence of patches. Conventionally, a row-major order is used for this linearization under the assumption that full self-attention is utilized by the model. As self-attention, augmented with positional embeddings, is permutation-equivariant, the exact patch order has been treated as inconsequential.

However, this assumption does not hold in the context of modern, long-sequence models which introduce strong inductive biases such as locality \cite{beltagy2020longformerlongdocumenttransformer}, recurrence \cite{dai2019transformerxlattentivelanguagemodels}, or input-dependent state dynamics \cite{gu2024mambalineartimesequencemodeling} that are sensitive to input ordering. Through mechanisms such as sparse attention via masking summarizing early parts of a long-sequence into latent representations, these methods are able to model long-sequences in a computationally tractable fashion. However, these design choices introduce a strong dependency on patch ordering, something that has previously been overlooked.

In this work, we demonstrate that merely swapping the row-major scan for an alternative, such as column-major or Hilbert curves, yields measurable accuracy gains across multiple long-sequence backbones. Further, we introduce \ourmethod{}, a method to learn an optimal patch ordering. \ourmethod{} first approximates which patch ordering may lead to best performance and then learns to rank patches in order of importance with reinforcement learning. Specifically, to convert the 2-D image into a 1-D sequence, we first quantify the compressibility resulting from each of six different patch orderings. Then, a Plackett-Luce ranking model is initialized with the least compressible ordering and further trained to minimize classification loss with REINFORCE. \ourmethod{} improves performance on ImageNet-1K by up to $3.01\%~(\pm0.23\%)$ and Functional Map of the World by $13.35\%~(\pm0.21\%)$. Code and animations are available on \href{https://d3tk.github.io/REOrder/}{the project page}.



\begin{figure}
    \centering
    \includegraphics[width=\linewidth]{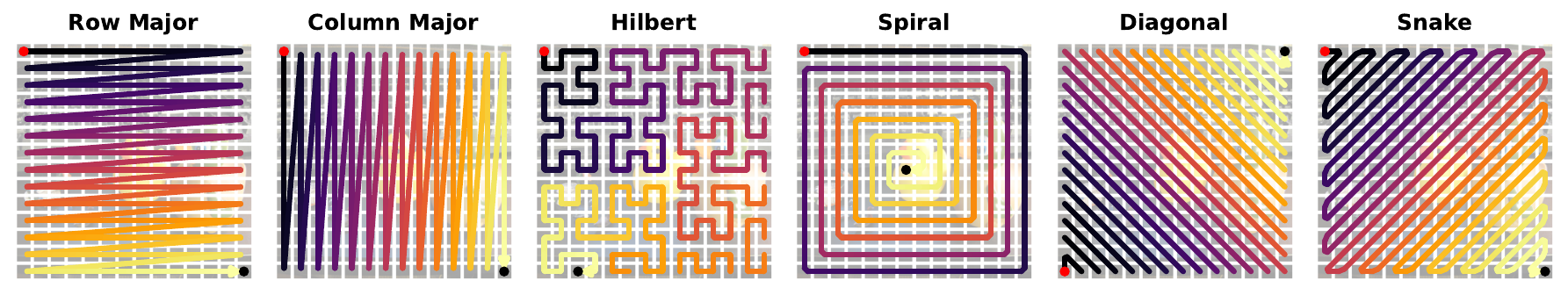}
    \vspace{-1em}
    \caption{\small \textbf{Visualizations of alternate patch sequence orderings.} Six different patch orders---row-major, column-major, Hilbert curve, spiral, diagonal, and snake---are shown as trajectories over a $14\times14$ grid of patches. Each trajectory begins at the red dot and progresses to the black dot, illustrating the 1-D ordering imposed on the 2-D patch grid.}
    \label{fig:patch-order-trajectories}
\end{figure}

\section{Related Works}

\paragraph{Transformers as sequence models for vision.}
The concept of treating visual data as linearized sequences for generative modeling and other tasks has been explored for some time, with early work using models like LSTMs on sequences of visual frames \cite{venugopalanSequenceSequenceVideo2015}. Building on the success of transformers in image-sequence modeling, the Image Transformer \cite{Parmar2018ImageT} adapted this architecture to single images by dividing each image into patches and processing these patches as a 1-D sequence. To manage the computational cost for images, the original Image Transformer employed self-attention mechanisms constrained to local neighborhoods of patches. The Vision Transformer (ViT) \cite{dosovitskiyImageWorth16x162020} expanded this attention approach, applying global self-attention over the entire sequence of image patches. However, this global attention mechanism, while effective, is an $\mathcal{O}(n^2)$ operation in computation and space w.r.t. the number of patches, becoming computationally prohibitive for long-sequences. To overcome this challenge and enable the application of transformers to larger images, subsequent research has developed various strategies, including hierarchical tokenization and more efficient attention mechanisms. Hierarchical tokenization, for instance, involves processing images at multiple scales to reduce the sequence length at higher levels. Gupta et al. \cite{guptaXTNestedTokenization2024} show how such hierarchical tokenization can be used for tractably modeling the resulting long-sequences. Motivated by this history of linearizing 2-D images into 1-D sequences for transformer processing and the methods developed to handle the resulting sequence length and computational complexity, in this work, we explore whether the specific order in which a 2-D image is converted to a 1-D sequence matters for model performance.

\paragraph{Long-sequence vision models.} As established, the quadratic computational cost of full self-attention with respect to sequence length makes processing long-sequences of patches very expensive for standard Vision Transformers. To address this, significant research has focused on developing efficient transformer architectures and long-sequence models that reduce this complexity, such as Sparse Attention \cite{child2019generatinglongsequencessparse}, Longformer \cite{beltagy2020longformerlongdocumenttransformer}, Transformer-XL \cite{dai2019transformerxlattentivelanguagemodels}, and Mamba \cite{gu2024mambalineartimesequencemodeling, renAutoregressivePretrainingMamba2024}. While making modeling tractable, these methods often introduce specific inductive biases in how they process sequences, potentially making them sensitive to the order of the input tokens (which we discuss further in~\Cref{sec:approx-patch-order}). In this work, we study the effect of patch ordering on Longformer, Transformer-XL, and Mamba, demonstrating that the inductive biases inherent in these different long-sequence modeling approaches lead to substantial accuracy variance across different patch orders.

\paragraph{Patch order sensitivity.} Qin et al. \cite{qin2023understandingimprovingrobustnessvision} provided an in-depth study of the self-attention mechanism, establishing its property of permutation equivariance. However, a comparable analysis for long-sequence models that process vision tokens remains largely unexplored. This gap is significant, especially when considering findings from NLP, where studies like \cite{liu2023lost} revealed that long context models tend to neglect tokens in the middle of a sequence. Motivated by these observations and the underexplored nature of token ordering in long visual sequences, our work aims to investigate these effects. While some initial work in vision, such as ARM \citep{renAutoregressivePretrainingMamba2024}, has touched upon scan order for Mamba models (settling on small, row-wise clusters), this exploration did not comprehensively cover the wider space of possible orderings. Our research addresses this limitation by more rigorously examining the search space through experiments with six different orderings across multiple sequence models. Furthermore, to move beyond predefined arrangements, our method, \ourmethod{}, employs reinforcement learning to explore the space of possible permutations for the patch sequence.

\paragraph{Learning to rank with reinforcement learning.} The foundation of learning to rank from pairwise comparisons was established by Bradley and Terry \cite{bradleyterrymodel} who proposed a probabilistic model to estimate item scores via maximum likelihood. More recent work has extended learning to rank with reinforcement learning frameworks, particularly in information retrieval \cite{liu2009learning} and recommendation systems \cite{Karatzoglou2013LearningTR}. Wu et. al. \cite{wu2024patchrankingefficientclip} assign importance scores to image tokens based on their impact on CLIP’s predictions then trains a supervised predictor to replicate these scores for efficient token pruning.  Learning to rank patches can be reformulated to learning a permutation resulting from the ordering of patches by rank. B{\"u}chler et. al. learns to select effective permutations of patches or frames through a reinforcement policy that maximizes the improvement in self-supervised permutation classification accuracy in a discrete action space. In contrast, \ourmethod{} models a stochastic policy over permutations using a Plackett-Luce distribution \cite{luceIndividualChoiceBehavior1959, plackettAnalysisPermutations1975} and optimizes it with REINFORCE \cite{williamsSimpleStatisticalGradientfollowing} and Gumbel Top-$k$ sampling which allows for more flexible orderings.


\section{Preliminaries}
\label{sec:preliminaries}

We first establish the properties of the self-attention mechanism, specifically its equivariance under conjugation by permutation matrices and the matrix multiplication that lends it $\mathcal{O}(n^2)$ complexity. We then examine how Transformer-XL, Longformer, and Mamba model relationships between long-sequences and the design choices they make to side-step the quadratic complexity of self-attention. This establishes the sensitivity of long-sequence models to patch ordering.

\subsection{Self-attention and permutation equivariance}
\label{sec:self-attn-equiv}

Self-attention is the primary mechanism used in the Vision Transformer.
Let the $n$ image patches form the matrix
\[
\mathbf{X}=[x_{1};\dots;x_{n}]\in\mathbb{R}^{n\times d}.
\]
Self-attention can then be computed as
\begin{equation}
\operatorname{Attn}(\mathbf{X})
=\operatorname{softmax}\!\Bigl(
\tfrac{(\mathbf{X}\mathbf{W}_q)(\mathbf{X}\mathbf{W}_k)^{\!\top}}{\sqrt d}
\Bigr)\,\mathbf{X}\mathbf{W}_v,
\qquad
\mathbf{W}_q,\mathbf{W}_k,\mathbf{W}_v\in\mathbb{R}^{d\times d}
\label{eq:self-attn}
\end{equation}
where $\mathbf{W}_q$, $\mathbf{W}_k$, and $\mathbf{W}_v$ are learnable query, key, and value matrices, respectively.

The $n\times n$ similarity matrix inside the softmax is $\mathcal{O}(n^2)$ which is undesirable for large $X$. However, full self-attention has useful symmetry.

\begin{restatable}[Permutation equivariance of self-attention]
{proposition}{PropSelfAttnEquivariance}
\label{prop:self-attn-equivalence}
For every permutation matrix
$\mathbf{P}\in\{0,1\}^{n\times n}$,
\begin{equation}
\operatorname{Attn}(\mathbf{P}\mathbf{X})=\mathbf{P}\,\operatorname{Attn}(\mathbf{X}).
\label{eq:equivariance}
\end{equation}
\end{restatable}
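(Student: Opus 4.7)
The plan is to prove the identity by direct substitution of $\mathbf{PX}$ into \Cref{eq:self-attn} and tracking how the permutation propagates through each factor. First I would expand the pre-softmax logit matrix: since $\mathbf{W}_q,\mathbf{W}_k$ act only on the feature (column) axis, we have $(\mathbf{PX}\mathbf{W}_q)(\mathbf{PX}\mathbf{W}_k)^{\top} = \mathbf{P}\,(\mathbf{XW}_q)(\mathbf{XW}_k)^{\top}\mathbf{P}^{\top}$, so the $n\times n$ similarity matrix is conjugated by $\mathbf{P}$, i.e.\ its rows and columns are permuted by the same permutation.

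Next, the key auxiliary fact I would establish is that the row-wise softmax commutes with such conjugation:
\begin{equation*}
\operatorname{softmax}(\mathbf{P}\mathbf{A}\mathbf{P}^{\top}) \;=\; \mathbf{P}\,\operatorname{softmax}(\mathbf{A})\,\mathbf{P}^{\top}.
\end{equation*}
The reason is that left-multiplication by $\mathbf{P}$ permutes rows, while right-multiplication by $\mathbf{P}^{\top}$ permutes columns; since softmax exponentiates entrywise and normalizes each row by the sum of that row's entries, permuting the entries within a row leaves the normalizer unchanged and simply relabels the outputs in the same permuted order. A short elementwise argument, writing $[\mathbf{P}\mathbf{A}\mathbf{P}^{\top}]_{ij} = \mathbf{A}_{\pi^{-1}(i),\pi^{-1}(j)}$ for the permutation $\pi$ induced by $\mathbf{P}$, makes this rigorous.

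With the softmax identity in hand, the remainder is bookkeeping. Applying it gives
\begin{equation*}
\operatorname{softmax}\!\Bigl(\tfrac{\mathbf{P}(\mathbf{XW}_q)(\mathbf{XW}_k)^{\top}\mathbf{P}^{\top}}{\sqrt{d}}\Bigr)(\mathbf{PX})\mathbf{W}_v
\;=\;
\mathbf{P}\,\operatorname{softmax}\!\Bigl(\tfrac{(\mathbf{XW}_q)(\mathbf{XW}_k)^{\top}}{\sqrt{d}}\Bigr)\mathbf{P}^{\top}\mathbf{P}\,\mathbf{X}\mathbf{W}_v,
\end{equation*}
and then using the orthogonality of permutation matrices, $\mathbf{P}^{\top}\mathbf{P} = \mathbf{I}$, collapses the middle factor and yields $\mathbf{P}\,\operatorname{Attn}(\mathbf{X})$, as required.

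The only nontrivial step is the softmax commutation lemma; everything else is linear algebra. I do not expect a real obstacle here, since the statement is essentially a symmetry observation, but care is warranted in distinguishing row permutation (which relabels tokens) from column permutation (which relabels the logit targets) and in confirming that the scaling by $\sqrt{d}$, being a scalar, pulls through the conjugation unchanged.
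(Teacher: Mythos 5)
Your proposal is correct and follows essentially the same route as the paper's proof: conjugate the similarity matrix by $\mathbf{P}$, invoke the row-wise softmax conjugation identity $\operatorname{softmax}(\mathbf{P}\mathbf{M}\mathbf{P}^{\!\top})=\mathbf{P}\operatorname{softmax}(\mathbf{M})\mathbf{P}^{\!\top}$, and cancel via $\mathbf{P}^{\!\top}\mathbf{P}=\mathbf{I}$. Your elementwise sketch of the softmax lemma is slightly more explicit than the paper's, which simply asserts it.
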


Equation \eqref{eq:equivariance} states that self-attention is equivariant
to arbitrary permutations of the patch ordering. Hence, the Vision Transformer, composed entirely of self-attention, is also permutation-equivariant. A proof for Proposition \eqref{prop:self-attn-equivalence} is in~\Cref{app:perm_equivariance}; a thorough analysis is conducted by Xu et. al. \cite{xuPermutationEquivarianceTransformers2024}.

\subsection{Position embeddings}
The permutation equivariance of full self-attention is undesirable for image processing tasks. The spatial arrangement of patches in an image carries critical semantic information that must be preserved or made accessible to the model. Positional embeddings were introduced to explicitly provide the model with information about the original 2-D location of each patch within the image grid by summation with the image tokens. This allows the model to understand the spatial relationships between patches, regardless of their position in the input 1-D sequence.

We use learned absolute position embeddings across our models. In Transformer-XL, these are incorporated in conjunction with its native relative positional encoding. When a specific base patch ordering is adopted for an experiment, the positional embeddings are learned to align with that particular fixed sequence. Crucially, the positional embedding for the \texttt{[CLS]} token is consistently learned for the initial sequence position and is not reordered with the patch tokens, thereby maintaining a stable reference for classification tasks.

\subsection{Self-attention approximations and sensitivity to patch order}
\label{sec:approx-patch-order}
Recent work reduces the $\mathcal{O}(n^{2})$ cost of full attention by sparsifying the attention pattern, factoring the softmax kernel, or replacing it with a learned recurrence. These changes, however, also break the full permutation-equivariance guarantee of Eq.~\eqref{eq:equivariance}. Below we inspect three representative models. Their attention patterns are visualized in~\Cref{app:attention-patterns}.

\paragraph{Transformer-XL.}
Transformer-XL \cite{dai2019transformerxlattentivelanguagemodels} adds segment-level recurrence with memory. After a segment of length $L$ is processed, each layer caches its hidden states as memory $\mathbf{M}\!\in\!\mathbb{R}^{m\times d}$. At the next step the layer concatenates that memory with the current segment’s hidden states $\mathbf{H}\!\in\!\mathbb{R}^{L\times d}$:
\[
  \tilde{\mathbf{H}}
  \;=\;
  \operatorname{concat}\bigl[\operatorname{SG}(\mathbf{M}),\;
                                \mathbf{H}\bigr]
  \in\mathbb{R}^{(m+L)\times d},
\]
where $\operatorname{SG}(\cdot)$ is a stop-gradient. Following Eq. \eqref{eq:self-attn}, we define a single set of projection matrices as
\[
  Q=\mathbf{H}\mathbf{W}_q,\qquad
  K=\tilde{\mathbf{H}}\,\mathbf{W}_k,\qquad
  V=\tilde{\mathbf{H}}\,\mathbf{W}_v
\]

Let $\mathbf{P}$ permute the $L$ patches of only the current segment (but not the memory). Memory corresponds to previous segments, so its rows are fixed under permutation.
\[
\mathbf{P}\tilde{\mathbf{H}}=[\operatorname{SG}(\mathbf{M});\,\mathbf{P}\mathbf{H}]
\]
Because the second and third logit terms of
$\mathbf{M}_{\text{TXL}}(i,j)$ depend on the relative index $i-j$,
conjugating by $\mathbf{P}$ does not commute with the soft-max:
\[
A^{\text{TXL}}(\mathbf{P}\mathbf{H})\neq \mathbf{P}\,A^{\text{TXL}}(\mathbf{H}).
\]
Thus, Transformer-XL is permutation-sensitive even though its content-content term alone would be equivariant.

\paragraph{Longformer.}
Longformer \cite{beltagy2020longformerlongdocumenttransformer} uses a sliding‑window pattern of width $w$ plus $g$ global tokens with their own
projections $\mathbf{W}_q^{\text{(g)}},\mathbf{W}_k^{\text{(g)}},\mathbf{W}_v^{\text{(g)}}$.
Because the local mask $\mathbf{M}^{\text{local}}$ fixes which $(i,j)$ pairs are valid, applying $P$ to the rows/columns re‑labels many entries as illegal $(-\infty)$ and others as legal, so
\[
\operatorname{softmax}(\mathbf{P} \mathbf{M}^{\text{local}} \mathbf{P}^{\!\top})\neq \mathbf{P}\,\operatorname{softmax}(\mathbf{M}^{\text{local}})\,\mathbf{P}^{\!\top}.
\]
The same holds for the global mask unless $\mathbf{P}$ preserves the hand‑picked global positions. Therefore, Longformer is sensitive to patch ordering by design. It converts $\mathcal{O}(n^{2})$ complexity to $\mathcal{O}(nw)$ under a rigid spatial prior.

\paragraph{Mamba and ARM.}
Mamba \cite{gu2024mambalineartimesequencemodeling} does not implement quadratic attention and instead introduces a content-dependent state-space update.
\[
  \mathbf{h}_t
  \;=\;
  \mathbf{A}_t\,\mathbf{h}_{t-1}\;+\;\mathbf{B}_t\,\mathbf{x}_t,\qquad
  \mathbf{y}_t=\mathbf{C}_t\,\mathbf{h}_t,
\]
with inputs $\mathbf{x}_t$ produced by scanning the patch sequence left‑to‑right. A permutation $\mathbf{P}$ re‑orders the stream, changing $(\mathbf{B}_t,\mathbf{C}_t,\boldsymbol{\Delta}_t)$ and the sequence of matrix multiplications, so the recurrence yield is different. Therefore Mamba is also permutation sensitive. Its $\mathcal{O}(n)$ complexity comes at the price of a fixed processing order.

In this work, we use ARM as introduced by Ren et. al. \cite{renAutoregressivePretrainingMamba2024}. In ARM, every Mamba layer
runs four causal scans in different directions $d$ and sums their outputs before the channel‑mixing MLP.

For each $d\in\dirs$ let $x_{t}^{(d)}$ be the patch encountered at time step $t$ when traversing the image in direction $d$. The direction‑specific recurrence is
\[
\mathbf{h}_{t}^{(d)} \;=\; \mathbf{A}_{t}^{(d)}\,\mathbf{h}_{t-1}^{(d)} + \mathbf{B}_{t}^{(d)}\,\mathbf{x}_{t}^{(d)}, 
\qquad
\mathbf{y}_{t}^{(d)} \;=\; \mathbf{C}_{t}^{(d)}\,\mathbf{h}_{t}^{(d)},
\]
and the layer output combines the four scans via
\[
\mathbf{y}_t \;=\; \sum_{d\in\dirs} \mathbf{y}_{t}^{(d)} .
\]

Similar to Mamba, because each scan is individually directional, an arbitrary permutation $P$ of the patch order both re-orders the input streams $\{\mathbf{x}_{t}^{(d)}\}$ and alters the learned parameter sequences $\bigl\{\mathbf{A}_{t}^{(d)},\mathbf{B}_{t}^{(d)},\mathbf{C}_{t}^{(d)}\bigr\}$. Hence the mapping still violates permutation equivariance $\mathbf{y}(\mathbf{P}\mathbf{X})\neq \mathbf{P}\,\mathbf{y}(\mathbf{X})$. Further discussion in regards to the symmetry and performance implications of ARM with different orderings is discussed in ~\Cref{app:limitations}

\begin{figure}
    \centering
    \includegraphics[width=\linewidth]{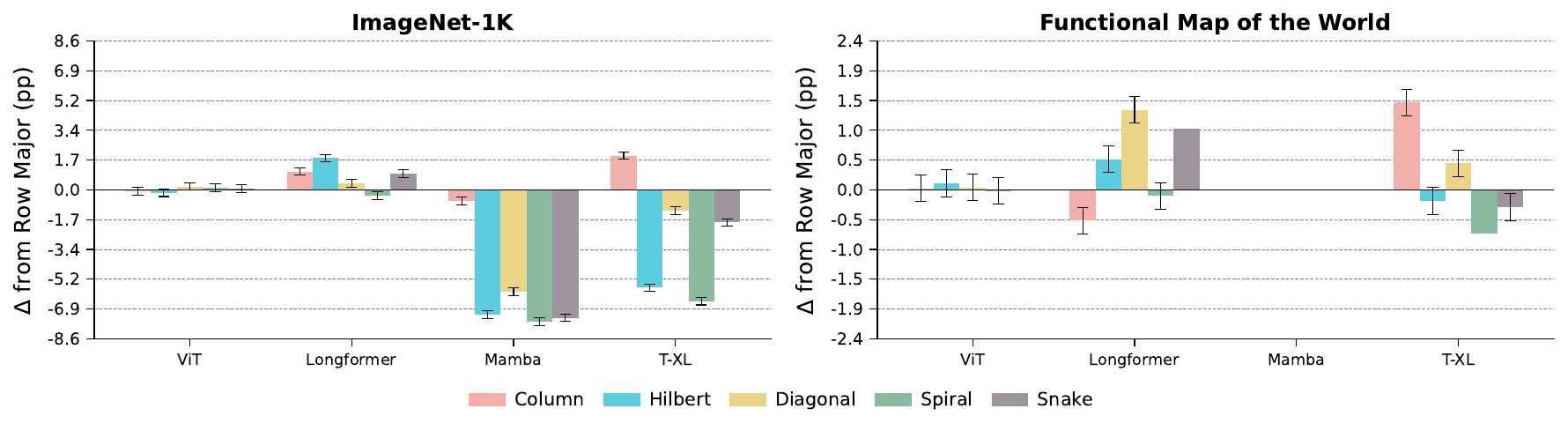}
    \vspace{-2em}
    \caption{\small \textbf{Patch order affects the performance of long-sequence models.} This figure compares the top-1 accuracy of Vision Transformer (ViT), Longformer, Mamba, and Transformer-XL (T-XL) on ImageNet-1K and Functional Map of the World when using alternate patch orderings, relative to their standard row-major performance. As expected, ViT remains equivariant to patch sequence permutations. In contrast, long-sequence models exhibit substantial performance variability depending on the patch ordering. No single ordering consistently outperforms others across models or datasets, necessitating dynamic patch ordering strategies.}
    \label{fig:patch-order-impact}
\end{figure}

\section{Does Patch Order Matter?}
\label{sec:does-patch-order-matter}
Conventional autoregressive vision models default to a raster-scan (row-major) order for flattening 2-D images into 1-D sequences. We investigate the question of whether the sequence in which image patches are presented to autoregressive vision models has an impact on their performance. We begin by outlining the datasets and models used in our empirical evaluation, followed by details of our training methodology. We then present results demonstrating that variations in patch ordering, even simple ones like column-major scans, can lead to differences in model outcomes, thereby motivating a search for more optimal, potentially learned, orderings. We define and explore six fixed patch orders: row-major, column-major, Hilbert curve, spiral, diagonal, and snake. These are visualized in~\Cref{fig:patch-order-trajectories} and formally defined in~\Cref{app:patch-order-trajectories}.

\paragraph{Datasets.} Images captured in different contexts demonstrate varying structural biases. To study whether such datasets are susceptible to patch ordering effects to different degrees, we run experiments on two datasets: ImageNet-1K \cite{dengImageNetLargescaleHierarchical2009} (natural images) and Functional Map of the World \cite{christieFunctionalMapWorld2018} (satellite). We train on their respective training sets and report results on the validation sets. Details on the dataset licenses are provided in ~\Cref{app:licenses}

\paragraph{Models.} We experiment with the Vision Transformer (ViT) \cite{dosovitskiyImageWorth16x162020}, Transformer-XL (TXL) \cite{dai2019transformerxlattentivelanguagemodels}, Longformer \cite{beltagy2020longformerlongdocumenttransformer}, and ARM model as our Mamba variant \cite{gu2024mambalineartimesequencemodeling, renAutoregressivePretrainingMamba2024}. These three long-sequence architectures were chosen to represent distinct approaches to efficient sequence modeling: Transformer-XL employs segment-level recurrence and relative positional encodings to extend context length; Longformer reduces the quadratic attention cost through a combination of sliding-window and global attention; and Mamba introduces a structured state-space model with linear-time complexity and constant memory scaling. Together, they span the major design paradigms for long-range dependency modeling. We then test whether patch-ordering effects persist across fundamentally different inductive biases.

We utilized the \texttt{timm} implementation for the Vision Transformer (ViT) and the HuggingFace implementation for Longformer. Both were adapted with minor modifications to accommodate varying patch permutations. TXL is based on the official implementation and includes a newly introduced, learned absolute position embedding to account for changing patch orders across batches. We use ARM \cite{renAutoregressivePretrainingMamba2024} as our vision Mamba model of choice due to its training stability. For all of the models, the image size is 224$\times$224 with a patch size of 16$\times$16. The Transformer-XL memory length ($\mathbf{M}$) was set to 128 and the attention window size ($\mathbf{M}^{local}$) for Longformer was set to 14. All four models prepend a learnable class \texttt{[CLS]} token as a fixed-length representation for image classification. The \texttt{[CLS]} token is always retained as the first token in the sequence. All models use their respective Base configurations. Complete details about the model configurations are in~\Cref{app:model-detail}. 

\paragraph{Training.} Experiments are conducted on machines equipped with either 8$\times$80GB A100 GPUs or 4$\times$40GB A100 GPUs. We apply basic data augmentations of resizing to 256$\times$256, center crop to 224$\times$224 and then a horizontal flip with $p=0.5$. We ablate this augmentation choice in ~\Cref{app:results-no-flips}. All models are trained for 100 epochs the AdamW optimizer using $\beta_1=0.9$, $\beta_2=0.999$, weight decay of $0.03$, and a base learning rate of $\alpha=1.0\times10^{-4}$. Batch sizes are held constant for all runs across all model-dataset pairs (details in~\Cref{app:additional-training}). We apply cosine learning rate decay with a linear warmup over 5 epochs. For the reinforcement learning experiments introduced in~\Cref{sec:learning-to-order}, we use the same optimizer configuration but with a reduced base learning rate of $\alpha=1.0\times10^{-5}$ and no decay. 

\subsection{Performance Variation Across Orderings}
\label{sec:patch-order-matters-results}

We evaluated top-1 accuracy on validation sets, estimating the Standard Error of the Mean (SEM) using a non-parametric bootstrap method with 2,000 resamples. Our analysis focused on how model performance varied with different patch orderings across datasets.

As anticipated, the Vision Transformer (ViT) demonstrated permutation equivariance, achieving consistent top-1 accuracy (approx. 37.5\% on ImageNet-1K and 46.5\% on FMoW) irrespective of patch order (\autoref{fig:patch-order-impact}). In contrast, long-sequence models like Longformer, Mamba, and Transformer-XL (T-XL) showed performance variations dependent on patch order. T-XL and Mamba were particularly sensitive; on ImageNet-1K, T-XL's accuracy increased by 1.92\%(±0.21 percentage points) with column-major ordering and decreased by 6.43\%(±0.21 percentage points) with spiral. Longformer also benefited from alternative orderings (column-major, Hilbert, snake) on ImageNet-1K, improving by up to 1.83\%(±0.22 percentage points) over row-major.

Dataset characteristics altered these trends. On FMoW, Longformer's optimal ordering shifted (e.g., diagonal increased accuracy by 1.3\%(±0.22 percentage points) while column-major was detrimental). T-XL still favored column-major but also benefited from diagonal ordering on FMoW. Overall, FMoW exhibited less sensitivity to patch ordering than ImageNet, likely due to the greater homogeneity of satellite imagery compared to diverse natural images. Mamba consistently performed worse with non-row/column major orderings, likely because its fixed causal scan directions ($d\in\dirs$) conflict with other patch sequences. Adapting Mamba's scan order to the patch ordering could potentially mitigate this performance drop.

These results highlight that there is no one ordering that works best for all models or datasets. Given that in many cases an alternate ordering outperforms the standard row-major order, this raises an important question: can we discover an optimal ordering tailored to a specific model and dataset? Moreover, are there useful priors we can identify to guide the search for such orderings?

\section{Learning an Optimal Patch Ordering with \ourmethod{}}
\label{sec:can-we-learn-optimal-ordering}
\label{sec:information-theoretic-initialization}

The observation that patch order influences model performance suggests the existence of an optimal ordering for each model-dataset pair. To find such orderings, we introduce \ourmethod{}, a unified framework that combines unsupervised prior discovery with task-specific learning. We begin with an information-theoretic analysis, examining the link between sequence compressibility and downstream performance. \ourmethod{} automates this process to derive a prior over effective patch orderings. Building on this, it employs a reinforcement learning approach to directly learn task-specific orderings, leveraging the prior as guidance. 

\begin{figure}
    \centering
    \includegraphics[width=\linewidth]{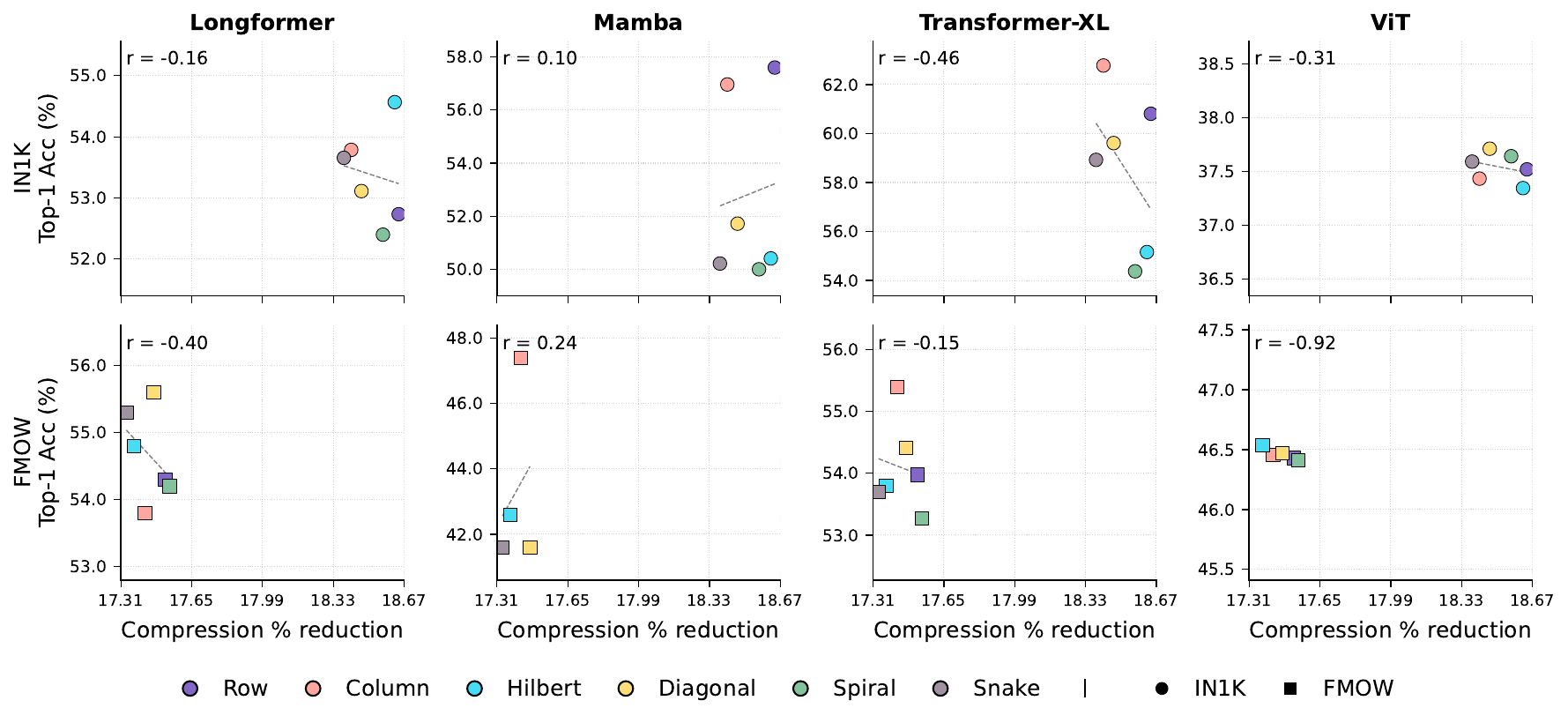}
    \vspace{-2em}
    \caption{\small \textbf{Compression of 1-D sequences can serve as a weak prior for optimal patch ordering.} Top-1 accuracy is compared to percentage reduction for different patch orderings across four models for both ImageNet-1K and FMoW.}
    \label{fig:compression-order}
\end{figure}

\paragraph{Information-Theoretic Initialization} The order in which image patches are arranged affects the compressibility of the resulting sequence. For example, in a conventional raster-scan order, adjacent patches often contain similar content, making the sequence more compressible. This local redundancy might make the prediction task more trivial, as the model could focus on learning simple local correlations rather than capturing more complex, long-range dependencies. We first explore whether compression metrics could serve as a proxy for evaluating different patch orderings. Specifically, we discretize images using a VQ-VAE based model \cite{team2024chameleon} and encode the resulting token sequence codes using both unigram and bigram tokenization. For each configuration, we measure the compression ratio achieved by LZMA, which provides a quantitative measure of local redundancy in the sequence.

In~\Cref{fig:compression-order}, we observe that different patch orderings indeed lead to varying compression ratios. The row-major order, which is commonly used in vision transformers, achieves higher compression ratios, suggesting strong local redundancy. Interestingly, the Hilbert curve ordering, which aims to preserve spatial locality, shows similar compression characteristics to row-major order. In contrast, the column-major/spiral orderings exhibit lower compression ratios, indicating less local redundancy. This result highlights the limitations of applying a 1D compression algorithm (LZMA) to sequences derived from 2D data. While Hilbert curves preserve 2D locality better, this may not translate to higher compressibility for a 1D algorithm.

While this analysis provides a useful lens for examining patch dependencies, it produces only a weak and model-dependent correlation with downstream accuracy. That is, lower compressibility does not consistently imply improved task performance. For instance, while column-major ordering in~\Cref{fig:compression-order} shows lower compression ratios, it does not consistently lead to better model performance across all architectures. This result is itself significant as it demonstrates that no simple, universal heuristic exists for a 1-D sequence of image tokens.

This insight motivates the second stage of our approach, in which we employ reinforcement learning to discover task-specific orderings. We use compressibility as a weak prior to give a warm start to our exploration. We assess the importance of such initializations in ~\Cref{app:random-and-static}.

\section{Learning to Order Patches}
\label{sec:learning-to-order}

Our findings in~\Cref{sec:does-patch-order-matter} reveal that patch ordering can significantly affect the performance of long-sequence vision models. This suggests the potential value of discovering a data- and model-specific ordering to improve task performance. Unfortunately, learning a discrete permutation poses a unique challenge. For an image with $N$ patches, there are $N!$ possible orderings. For $N=196$, there are more than $10^{365}$ options. Searching this combinatorial space exhaustively is infeasible. A na{\"i}ve approach would require evaluating each permutation’s classification loss on every training example, which is computationally intractable and incompatible with gradient-based learning. We instead treat the selection of patch orderings as a stochastic policy learning problem, where the policy outputs a distribution over permutations. This allows us to sample permutations during training and optimize the policy against the downstream classification loss with reinforcement learning.

\subsection{Learning the Permutation Policy via REINFORCE}

Because permutations are discrete and non-differentiable, we adopt the REINFORCE algorithm \cite{williamsSimpleStatisticalGradientfollowing,zhangSampleEfficientReinforcement2021}, a score-function estimator, to search over the space of all possible permutations and optimize a permutation policy. \ourmethod{} is outlined in Algorithm \ref{alg:reinforce}. This formulation treats the permutation sampler as a stochastic policy, whose parameters are updated based on the classification reward.

REINFORCE is unbiased but can have high variance. To mitigate this, we subtract a running baseline $b_t$ from the reward, giving $A_t = r_t - b_t $, where $b_{t+1} \;=\; \beta\,b_t + (1-\beta)\,r_t$, with $\beta~(=0.99)$ controlling the baseline’s momentum. The vision transformer’s parameters $\theta$ depend only on the cross-entropy loss $\mathcal{L}_{\operatorname{CE}}$, while the policy model receives gradients from $\mathcal{L}_{\operatorname{CE}}$ and the REINFORCE loss.

\begin{algorithm}[t]
\small
\caption{\ourmethod{} with a Plackett-Luce policy}
\label{alg:reinforce}
\begin{algorithmic}[1]
\Require mini-batch $\{(x^{(b)},y^{(b)})\}_{b=1}^{B}$, backbone $f_\theta$,
         logits $z$, Gumbel temperature $\tau$, baseline $b$, momentum
         $\beta$

\State $g \sim \operatorname{Gumbel}(0,1)^n$
\State $\pi \gets \operatorname*{argsort}_{\operatorname{desc}}(z+\tau g)$
        \Comment{Gumbel-top-$k$ permutation}
\State $x_\pi^{(b)} \gets \textsc{Permute}(x^{(b)},\pi)\quad\forall b$
\State $\hat y^{(b)} \gets f_\theta\bigl(x_\pi^{(b)}\bigr)$
\State $\displaystyle
        \mathcal{L}_{\operatorname{CE}}
        \gets -\frac{1}{B}\sum_{b=1}^{B}
        \log \hat y^{(b)}_{\,y^{(b)}}$
\State $r \gets -\mathcal{L}_{\operatorname{CE}}$ \Comment{Reward}
\State $b \gets \beta b + (1-\beta)r$
\State $A \gets r - b$ \Comment{Advantage}
\State $\displaystyle
        \mathcal{L}_{\operatorname{policy}}
        \gets -A \,\log P(\pi\mid z)$
\State $\mathcal{L}_{\operatorname{total}}
        \gets \mathcal{L}_{\operatorname{CE}} + \mathcal{L}_{\operatorname{policy}}$
\State Back-propagate $\nabla_{\theta,z}\mathcal{L}_{\operatorname{total}}$ and update with Adam
\end{algorithmic}
\end{algorithm}

\subsection{The Plackett-Luce Policy for Patch Orderings}

To parameterize the permutation distribution, we use the Plackett-Luce (PL) model \cite{luceIndividualChoiceBehavior1959, plackettAnalysisPermutations1975}. This model defines a distribution over permutations based on a learned logit vector $z \in \mathbb{R}^n$ associated with the image patches. Each patch gets a single parameter, so a 224$\times$224 image with patch size 16$\times$16 results in a model with $196$ parameters. This is negligible added overhead for training.
A permutation is sampled sequentially: at each step, a yet-unplaced patch is selected according to a softmax over the remaining logits:
\begin{equation}
P(\pi\mid z)
  \;=\;
  \prod_{i=1}^{n}
  \frac{\exp\!\bigl(z_{\pi_i}\bigr)}
       {\sum_{k=i}^{n}\exp\!\bigl(z_{\pi_k}\bigr)}.
  \label{eq:pl}
\end{equation}

Sampling from this distribution naively requires drawing one patch at a time in a sequential loop, which is inherently slow and difficult to parallelize across a batch. To make training efficient, we use the Gumbel-top-$k$ trick \cite{koolStochasticBeamsWhere2019} which generates samples from the Plackett-Luce distribution by perturbing logits with Gumbel noise and sorting. A policy $\pi$ is sampled as $\pi = \operatorname{argsort}(z + \tau g)$ where $g_i \sim \operatorname{Gumbel}(0,1)$ and $\tau > 0$ is a temperature parameter that trades off exploration and exploitation. The Gumbel-top-$k$ sampler is fully parallelizable and faster than iterative sampling, allowing permutation sampling in $\mathcal{O}(n \log n)$ time.

The log-probability of a sampled permutation is computed in closed form:
\begin{equation}
\log P(\pi\mid z)
   \;=\;
   \sum_{i=1}^{n}
   \bigl[z_{\pi_i}
         -\log\!\sum_{k=i}^{n}\exp(z_{\pi_k})\bigr],
   \label{eq:pl-logprob}
\end{equation}
which we implement efficiently using cumulative \texttt{logsumexp} in reverse.

The logit vector $z$ is initialized as a linear ramp from 0 to $-1$, then permuted according to the information-theoretic prior described in~\Cref{sec:information-theoretic-initialization}. This gives the model a sensible starting point that reflects structural cues discovered during unsupervised analysis, while still allowing gradients to adapt the ordering throughout training.

During training, a single set of Gumbel samples is drawn per batch so that every image shares the same permutation. At test time, we compute a deterministic maximum-likelihood permutation $\hat\pi = \operatorname*{argsort}(z)$.  To ensure stability, we permute only the positional embeddings of image tokens with $\pi$, keeping the \texttt{[CLS]} token fixed.


\begin{figure}
    \centering
    \includegraphics[width=0.9\linewidth]{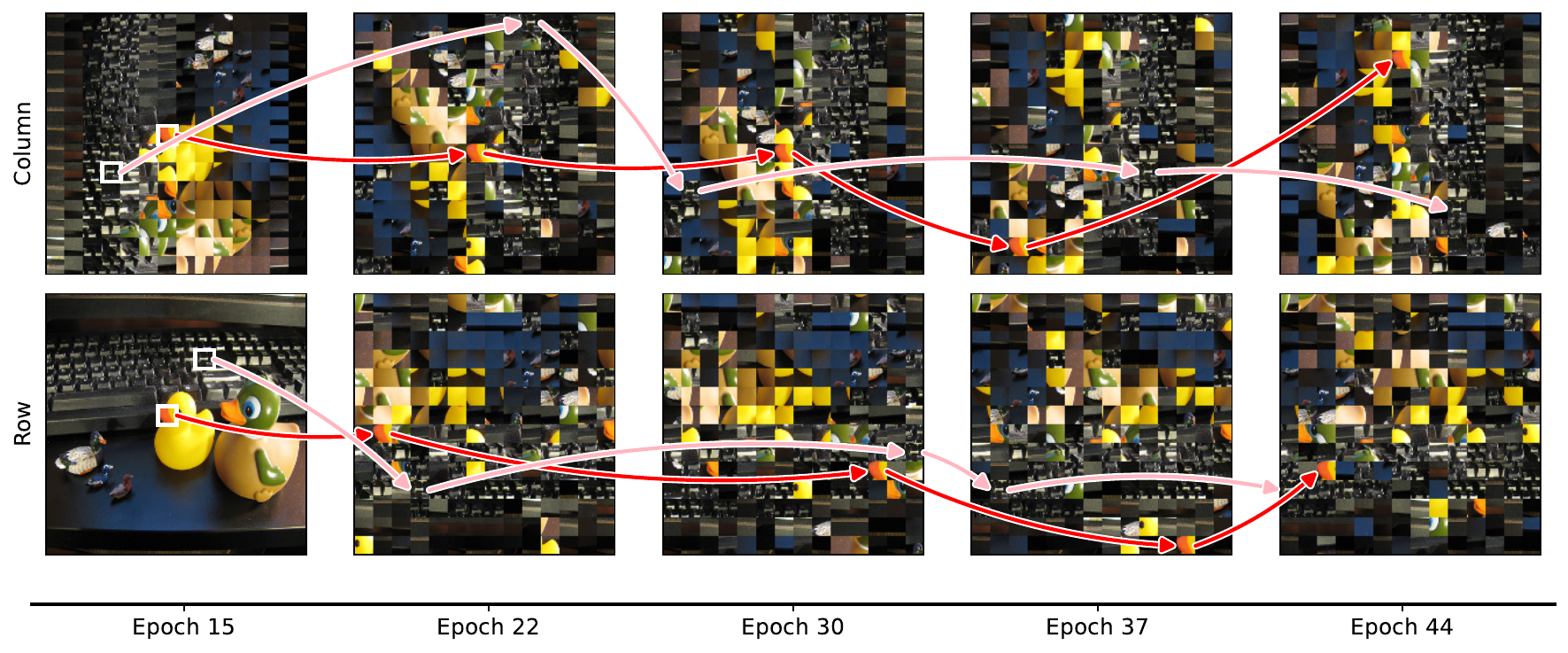}
    \caption{\small \textbf{The logits of the Plackett-Luce model, and therefore the permutation order, changes over the course of training}.  Longformer is initialized with column and row-major patch ordering and optimized with \ourmethod{} on ImageNet-1K. The image is of the class ``keyboard.'' We track two patches over the course of the policy curriculum: a keyboard key (light red arrow) and an irrelevant orange beak (dark red arrow). As the policy learns to order patches, we see the patches move toward the back of the sequence (i.e., are back-loaded) reflecting the dataset's center bias.}
    \label{fig:policy-evolution}
\end{figure}

\subsection{Curriculum for Policy Learning}

We adopt a three-stage curriculum that cleanly separates representation learning from policy learning. For the first $N$ epochs the classifier is trained with the canonical row-major patch order. This gives the vision transformer a stable starting point before any permutation noise is introduced. Beginning at epoch $N$ the Plackett-Luce policy is activated and trained with REINFORCE for $M$ epochs. Sampling uses the Gumbel-top-$k$ trick with a temperature schedule $\tau_t$ that climbs to a peak value before decaying to zero. While $\tau_t>0$ the permutation remains stochastic, encouraging exploration and allowing the policy to discover beneficial orderings. Once the temperature hits zero at epoch $N+M$, sampling collapses to the single maximum-likelihood permutation $\hat\pi=\operatorname*{argsort}(z)$. The policy gradients vanish, freezing the permutation, and the remaining epochs let the backbone finish optimizing with its now-determinate input order. 

For our experiments. the curriculum was set with $N = 15$ and $M = 30$. $\tau_t$ follows a triangle pattern where it increases linearly from $\tau=0.0$ at epoch $N$ to $M/2$ with $\tau=0.2$. It then decreases linearly from epoch $M/2$ to $M$ ending at $\tau=0$. The optimizer for the policy was AdamW($\beta_1{=}0.9$, $\beta_2{=}0.999$, weight decay $0.03$) with a learning rate of $1\times10^{-4}$.  The evolution of the policy over training is visualized in~\Cref{fig:policy-evolution}, demonstrating how salient patches for the target class of ``keyboard'' move to the end of the sequence to maximize classification accuracy.

\subsection{Effectiveness of Learned Patch Ordering}

The Plackett-Luce policy introduced by \ourmethod{} is a simple drop-in addition to model training, and as visualized in~\Cref{fig:patch-order-impact-rl}, in almost all cases, improves performance over their respective base patch order runs. Mamba observes gains of $2.20\%~(\pm0.22$ percentage points$)$ on average across different patch orderings for IN1K, and $9.32\%~(\pm0.22$ percentage points$)$ on FMoW. For ImageNet, the Hilbert curve ordering for Mamba is improved by $3.01\%~(\pm0.23$ percentage points$)$ while for Functional Map of the World, the diagonal ordering is improved by $13.35\%~(\pm0.21$ percentage points$)$. Transformer-XL demonstrates modest gains with \ourmethod{}. On IN1K, Transformer-XL accuracy improves by an average of $0.70\%~(\pm0.22$ percentage points$)$ but sees significant gains with the Hilbert curve ($1.50\%~(\pm0.21$ percentage points$)$) and spiral ($1.09\%~(\pm0.21$ percentage points$)$) patch orderings. On FMoW, \ourmethod{} improves the best-performing column-major order by an additional $1.10\%~(\pm0.21$ percentage points$)$, demonstrating that simply using a basic patch ordering alone may not be sufficient to get best performance. Longformer is unable to improve its accuracy on either dataset. However, since Longformer was the model that was initially also the least susceptible to changes in patch ordering due to its near-full approximation of self-attention, it is unsurprising that the use of \ourmethod{} does not achieve any noticeable performance gains.

\begin{figure}
    \centering
    \includegraphics[width=\linewidth]{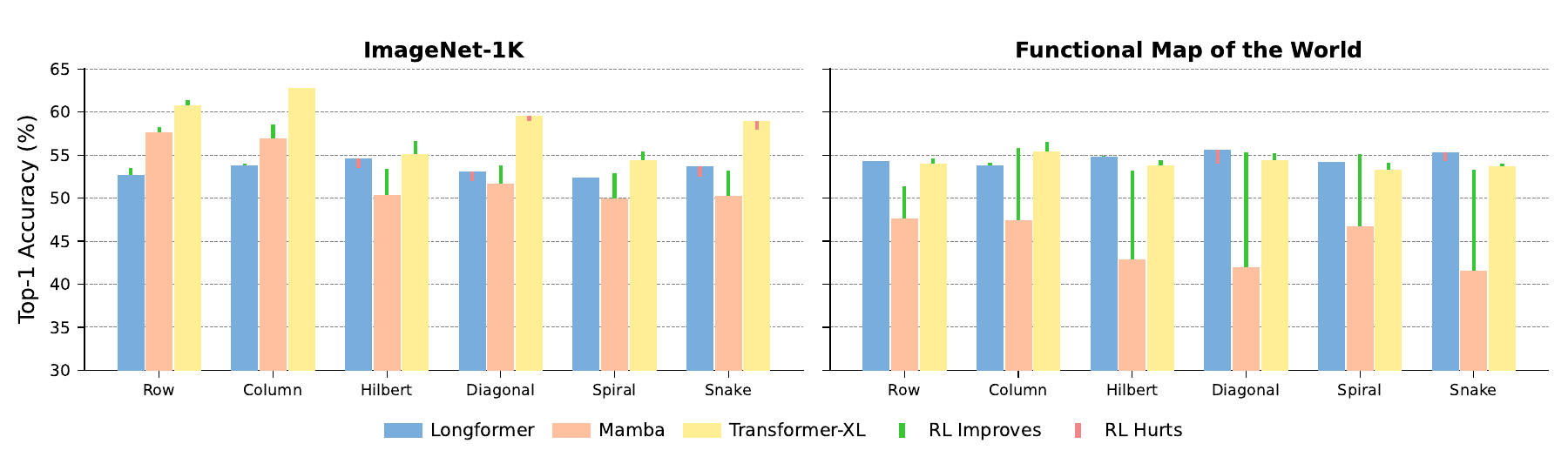}
    \vspace{-2em}
    \caption{\small \textbf{\ourmethod{} finds improvements over the best patch ordering prior in almost all cases.} Across all models, \ourmethod{} can find a better patch ordering than a static prior and improve accuracy across both ImageNet-1K and Functional Map of the World.}
    \label{fig:patch-order-impact-rl}
\end{figure}

To verify that these gains are not artifacts of training dynamics or limited training duration, we conducted two complementary robustness analyses (see Appendix~\ref{app:random-and-static} and~\ref{app:300-ep-runs}). The first examines the role of policy-driven exploration by comparing static and dynamic random permutation strategies. The second extends Transformer-XL training from 100 to 300 epochs to test stability under longer optimization schedules. Together, these experiments confirm that performance improvements arise from the reinforcement learning process itself and persist under extended training.

In summary, \ourmethod{}'s approach of learning a patch ordering proves to be a broadly effective strategy for boosting classification performance with long-sequence models, consistently elevating results even beyond the strongest baseline patch orderings for nearly all tested models and datasets. We outline current limitations of our study and potential future directions in ~\Cref{app:limitations}. We also provide a thorough interpretability analysis in ~\Cref{app:policy-evolution}.

\section{Conclusion}

This work establishes that contemporary vision models are sensitive to patch order and that row-major patch ordering, an extremely common way of converting 2-D images to 1-D sequences, can be suboptimal in many cases. Architectural modifications in models like Transformer-XL, Mamba, and Longformer, while enabling the processing of long-sequences, break permutation equivariance, leading to significant accuracy variations with different scan orders. We introduce \ourmethod{}, a method to learn optimal patch orderings by first deriving an information-theoretic prior and then learning a patch ranking by optimizing a Plackett-Luce policy with REINFORCE. This learned policy, implemented as a simple drop-in addition to model training, demonstrably improves classification accuracy for multiple long-sequence models on datasets such as ImageNet-1 and Functional Map of the World by up to  by up to $3.01\%$ and $13.35\%$, respectively, surpassing the best fixed orderings.

\section*{Acknowledgements}
Thank you to Jiaxin Ge and Lisa Dunlap for generously reviewing this paper for clarity and content. Stephanie Fu lent her keen design instinct and helped make the figures look pretty. Anand Siththaranjan and Sanjeev Raja were always game for late night discussions about how to search over a combinatorial space and the pitfalls of reinforcement learning. As part of their affiliation with UC Berkeley, the authors were supported in part by the National Science Foundation, the U.S. Department of Defense, and/or the Berkeley Artificial Intelligence Research (BAIR) Industrial Alliance program. The views, opinions, and/or findings expressed are those of the authors and should not be interpreted as representing the official views or policies of any supporting entity, including the Department of Defense or the U.S. Government. This work utilized the infrastructure at the DoD's High Performance Computing Modernization Program (HPCMP).

\bibliographystyle{unsrt}
\bibliography{main}

\clearpage
\appendix

\section*{Appendix}

\section{Discussion and Limitations}
\label{app:limitations}
There are many details and nuances to our experiments that are worth discussing further. Particularly, there are areas for improvements that we would like to introduce in future versions of this work.

\paragraph{Directional Asymmetry and ARM Interpretation.}
While our work primarily focuses on improving the permutation sensitivity of sequence-based vision models through \ourmethod{}, it is important to contextualize the architectural behavior of ARM. Although ARM’s four-directional scanning mechanism is symmetric in principle, our experiments show that its performance varies across different input orderings due to the anisotropic statistics of natural images and the model’s fixed directional composition. In particular, ARM achieves higher accuracy with a row-major ordering than with a column-major one, reflecting the stronger horizontal correlations inherent in natural scenes. This empirical asymmetry persists despite theoretical symmetry between the two configurations, revealing how data bias and architectural inductive bias jointly shape model behavior. Moreover, a Hilbert-curve ordering, which maximizes spatial locality, performs significantly worse because it disrupts ARM’s directional continuity. ARM’s recurrence dynamics are optimized for specific scanning patterns rather than spatial adjacency alone.

\ourmethod{} addresses these mismatches by learning an adaptive permutation of the input sequence that jointly benefits all four directional scans. When initialized from a row-major order, it improves validation top-1 accuracy to $58.25\%$, and from a column-major order to $58.58\%$ on ImageNet-1K. Even from a suboptimal Hilbert initialization, \ourmethod{} recovers much of the lost performance, reaching $53.42\%$. These results indicate that the learned permutations do not merely refine an existing scan but instead adaptively reorganize the patch sequence to better align the input statistics with ARM’s directional processing. This demonstrates that \ourmethod{} can serve as a practical mechanism for reconciling architectural symmetry with anisotropic visual data.

\paragraph{Random baseline.} A random baseline for a learned permutation baseline involves sampling a random ordering of patches for every batch during training. We set up this baseline with Transformer-XL on the ImageNet-1K dataset. This model achieved a maximum top-1 accuracy of $39.07$\% which is $15.25$ percentage points worse than the worst patch ordering tested for T-XL on IN1K (spiral). While a random baseline should be run for every model and dataset combination, the drastically lower performance of the random baseline gives us confidence in the veracity of our results.

\paragraph{Under-explored and under-tuned policy.} Images with $N=196$ patches have more than $10^{365}$ permutations. Searching this space exhaustively is computationally infeasible (it is estimated that there are between $10^{78}$-$10^{82}$ atoms in the observable universe---our quantity is slightly larger than that). The Plackett-Luce policy exploration runs for a vanishingly small amount of time. The curriculum, as tested in our experiments, is only active for 30 epochs. The majority of these epochs is spent warming up and cooling down the Gumbel noise temperature, with the peak noise (and therefore exploration) occurring for only one epoch. We were not able to tune these parameters due to computational constraints. Therefore, much is left ``on the table'' with respect to improving results with \ourmethod{}.

\paragraph{Dynamic Image Policy. } A key limitation of the present work is that \ourmethod{} learns a single global ordering shared across all samples within a dataset. While this design enables clean isolation of ordering effects and ensures stable optimization, it constrains the model’s flexibility in adapting to image-specific structures. A promising future direction is to extend \ourmethod{} toward dynamic, per-image ordering, where a lightweight policy network predicts ordering logits conditioned on patch embeddings. Such an approach would allow the ordering to adapt to spatial and semantic content, potentially capturing intra-dataset variability while preserving the benefits of learned sequencing. Investigating the trade-offs between global stability and dynamic adaptability represents an important next step for our future work.

\paragraph{Long Sequence Training.} To assess scalability, we initiated experiments on ultra–high-resolution images ($5888{\times}5888$), a setting where standard ViT models exceed GPU memory limits. These experiments are ongoing, and we plan to include results in future work exploring how \ourmethod{} behaves in extended sequence regimes.

\section{Proof of Proposition ~\ref{prop:self-attn-equivalence}}
\label{app:perm_equivariance}

\begin{theorem}[Permutation equivariance of self‑attention]
\label{thm:perm-equiv}
Let $\mathbf{X}\in\mathbb{R}^{n\times d}$ and let
$\mathbf{P}\in\{0,1\}^{n\times n}$ be any permutation matrix.
With $\operatorname{Attn}(\cdot)$ defined in
Eq. \eqref{eq:self-attn}, we have
\[
\operatorname{Attn}(\mathbf{P}\mathbf{X})=\mathbf{P}\,\operatorname{Attn}(\mathbf{X}).
\]
\end{theorem}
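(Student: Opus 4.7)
The plan is a direct substitution argument that tracks how a permutation $\mathbf{P}$ passes through each operation in Eq.~\eqref{eq:self-attn}. First I would use the associativity of matrix multiplication to observe that $(\mathbf{P}\mathbf{X})\mathbf{W}_q = \mathbf{P}(\mathbf{X}\mathbf{W}_q)$, and analogously for $\mathbf{W}_k$ and $\mathbf{W}_v$. Since the projections act only on the feature dimension, they commute with any row-permutation. Consequently the pre-softmax logit matrix transforms as
\begin{equation*}
\tfrac{1}{\sqrt{d}}(\mathbf{P}\mathbf{X}\mathbf{W}_q)(\mathbf{P}\mathbf{X}\mathbf{W}_k)^{\!\top}
= \mathbf{P}\,\mathbf{L}\,\mathbf{P}^{\!\top},
\end{equation*}
where $\mathbf{L} := \tfrac{1}{\sqrt{d}}(\mathbf{X}\mathbf{W}_q)(\mathbf{X}\mathbf{W}_k)^{\!\top}$ is the original logit matrix. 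So the conjugation by $\mathbf{P}$ on both sides of the logits is the core transformation to analyze.

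Next I would establish the key lemma that the row-wise softmax intertwines with conjugation by $\mathbf{P}$, i.e.\ $\operatorname{softmax}(\mathbf{P}\mathbf{L}\mathbf{P}^{\!\top}) = \mathbf{P}\,\operatorname{softmax}(\mathbf{L})\,\mathbf{P}^{\!\top}$. The argument is that left-multiplication by $\mathbf{P}$ merely reorders the rows (and each row's softmax is normalized independently, so its identity is unaffected by reordering rows), while right-multiplication by $\mathbf{P}^{\!\top}$ reorders the entries within each row by the same permutation. Exponentiation is elementwise and the row-normalizing denominator is a sum over the row, which is invariant under a within-row permutation. Writing this out entrywise with indices $(i,j) \mapsto (\sigma^{-1}(i), \sigma^{-1}(j))$ for the permutation $\sigma$ associated with $\mathbf{P}$ makes the claim immediate.

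Finally I would combine these pieces: applying the softmax yields $\mathbf{P}\mathbf{A}\mathbf{P}^{\!\top}$, where $\mathbf{A} := \operatorname{softmax}(\mathbf{L})$, and multiplying on the right by the permuted value matrix $\mathbf{P}\mathbf{X}\mathbf{W}_v$ gives $\mathbf{P}\mathbf{A}\mathbf{P}^{\!\top}\mathbf{P}\mathbf{X}\mathbf{W}_v = \mathbf{P}\mathbf{A}\mathbf{X}\mathbf{W}_v = \mathbf{P}\,\operatorname{Attn}(\mathbf{X})$, since $\mathbf{P}^{\!\top}\mathbf{P} = \mathbf{I}$ for any permutation matrix. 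The cancellation of $\mathbf{P}^{\!\top}\mathbf{P}$ is what transforms the conjugated attention weights back into a single left-multiplication by $\mathbf{P}$, completing the equivariance.

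The main obstacle, and really the only nontrivial step, is justifying the softmax intertwining property cleanly. It is tempting to appeal to ``softmax is row-wise'' as a slogan, but a clean proof requires explicitly tracking that conjugation by $\mathbf{P}$ permutes both the rows being normalized and the columns being summed over in a consistent way, so that the set of values in each row is unchanged (just reindexed) and the per-row denominator is preserved. Everything else (the commutation of $\mathbf{P}$ with linear projections, and the $\mathbf{P}^{\!\top}\mathbf{P} = \mathbf{I}$ cancellation) is a one-line observation.
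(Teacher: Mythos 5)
Your proposal is correct and follows essentially the same route as the paper's proof: commute $\mathbf{P}$ through the linear projections, apply the softmax conjugation identity $\operatorname{softmax}(\mathbf{P}\mathbf{M}\mathbf{P}^{\!\top})=\mathbf{P}\operatorname{softmax}(\mathbf{M})\mathbf{P}^{\!\top}$, and cancel via $\mathbf{P}^{\!\top}\mathbf{P}=\mathbf{I}$. Your entrywise justification of the softmax step is in fact more explicit than the paper's, which asserts the conjugation property with only a one-line remark.
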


\begin{proof}
Let
\[
S(\mathbf{X})=\operatorname{softmax}
\mathopen{}\left(\frac{{(\mathbf{X}\mathbf{W}_q)(\mathbf{X}\mathbf{W}_k)^{\!\top}}}{{\sqrt d}}\right)\in\mathbb{R}^{n\times n}.
\]
Conventionally, softmax for $\operatorname{Attn}$ is applied row-wise over $(\mathbf{X}\mathbf{W}_q)(\mathbf{X}\mathbf{W}_k)^{\!\top}$ so that the ensuing multiplication by the value matrix serves as a normalized weighted sum over value columns.
Since a permutation merely re‑orders rows and columns, it satisfies the conjugation property
\begin{equation}
\operatorname{softmax}(\mathbf{P}\mathbf{M}\mathbf{P}^{\!\top})=\mathbf{P}\,\operatorname{softmax}(\mathbf{M})\,\mathbf{P}^{\!\top}
\label{eq:softmax-conjugation}
\end{equation}
for any square matrix~$\mathbf{M}$ and any permutation matrix~$\mathbf{M}$.

Then,
\begin{align*}
\operatorname{Attn}(\mathbf{P}\mathbf{X})
  &=\operatorname{softmax}\!
     \Bigl(\tfrac{(\mathbf{P}\mathbf{X}\mathbf{W}_q)(\mathbf{P}\mathbf{X}\mathbf{W}_k)^{\!\top}}{\sqrt d}\Bigr)\,
     \mathbf{P}\mathbf{X}\mathbf{W}_v \\[4pt]
  &=\operatorname{softmax}\!
     \Bigl(\tfrac{\mathbf{P}(\mathbf{X}\mathbf{W}_q)(\mathbf{X}\mathbf{W}_k)^{\!\top}\mathbf{P}^{\!\top}}{\sqrt d}\Bigr)\,
     \mathbf{P}\mathbf{X}\mathbf{W}_v \\[4pt]
  &\overset{\eqref{eq:softmax-conjugation}}=
     \mathbf{P}\,S(\mathbf{X})\,\mathbf{P}^{\!\top}\,\mathbf{P}\mathbf{X}\mathbf{W}_v\\[4pt]
  &=\mathbf{P}\,S(\mathbf{X})\,\mathbf{X}\mathbf{W}_v  && (\mathbf{P}^{\!\top}\mathbf{P} = \mathbf{I})\\[4pt]
  &\overset{\eqref{eq:self-attn}}=\mathbf{P}\,\operatorname{Attn}(\mathbf{X}).
\end{align*}
Hence self‑attention is permutation equivariant.
\end{proof}

\section{Model Details}
\label{app:model-detail}

\begin{table}[ht]
  \centering
  \caption{Patch Order Models}
  \label{app-tab:patch-order}
  \begin{tabular}{llrrr}
    \toprule
    Model           & Size  & \# of Parameters & Width & Depth \\
    \midrule
    ViT             & Base  & 86\,570\,728      & 768   & 12    \\
    Transformer-XL  & Base  & 93\,764\,584      & 768   & 12    \\
    Longformer      & Base  & 107\,683\,048     & 768   & 12    \\
    Mamba           & Base  & 85\,036\,264      & 768   & 12    \\
    \midrule
    ViT             & Large & 304\,330\,216     & 1024  & 24    \\
    Transformer-XL  & Large & 329\,601\,512     & 1024  & 24    \\
    Longformer      & Large & 379\,702\,760     & 1024  & 24    \\
    Mamba           & Large & 297\,041\,352     & 1024  & 24    \\
    \bottomrule
  \end{tabular}
\end{table}

We attempt to parameter match each model we experiment with in an effort to remove one axis of variability from our results. We experiment with the Base variant of each model in our experiments. Despite the intention for the Base variants to be roughly equivalent to each other, Longformer-Base and Mamba-Base vary by \textasciitilde22M parameters. To contextualize how width and depth affect the number of parameters, we provide details for Base and Large variants in Table \ref{app-tab:patch-order}.

\section{Additional Model Training Details}
\label{app:additional-training}

Each training run required 100 epochs for completion. ImageNet experiments were run on 8$\times$ 80GB A100 GPUs, while Functional Map of the World experiments were run on 4$\times$ 40GB A100 GPUs. Transformer-XL, ViT and Mamba, and the Plackett-Luce policy were compiled with TorchDynamo using the Inductor backend. The Longformer encoder provided by HuggingFace was unable to be compiled. For ImageNet runs, Vision Transformer runs took \textasciitilde$8.5$ hours to run, Transformer-XL runs took \textasciitilde$10$ hours, Longformer runs took \textasciitilde$12$ hours, and Mamba runs took \textasciitilde$19$ hours. For Functional Map of the World, Vision Transformer runs took \textasciitilde$17.5$ hours, Transformer-XL runs took \textasciitilde$21$ hours, Longformer runs took \textasciitilde$26$ hours, and Mamba runs took \textasciitilde$31$ hours. All runs were executed on dedicated datacenters accessed remotely.

All runs nearly maximized the available VRAM on their respective GPUs. The breakdown of models and batch sizes is provided below:

\begin{table}[ht]
  \centering
  \caption{Batch sizes for every model-dataset pairing.}
  \label{tab:patch-order}
  \begin{tabular}{lllr}
    \toprule
    Model           & Size  & Dataset & Batch Size \\
    \midrule
    ViT             & Base  & IN1K      & 896  \\
    ViT             & Base  & FMOW      & 448  \\
    \midrule
    Transformer-XL  & Base  & IN1K      & 640  \\
    Transformer-XL  & Base  & FMOW      & 320  \\
    \midrule
    Longformer      & Base  & IN1K      & 640  \\
    Longformer      & Base  & FMOW      & 320  \\
    \midrule
    Mamba           & Base  & IN1K      & 640  \\
    Mamba           & Base  & FMOW      & 320  \\
    \bottomrule
  \end{tabular}
\end{table}

In sum, the total set of experiments run required a total of $\sim9,336$ A100 GPU hours.

Models were trained with a minimal set of augmentations, namely: resize to $256\times256$ pixels, center crop to $224\times224$ pixels, and a random horizontal flip with $p=0.50$.\

\subsection{Increased Training Time}
\label{app:300-ep-runs}

We continued training our Transformer-XL Row and Transformer-XL \ourmethod{} Row models from 100 epochs to 300 epochs to test the robustness of our conclusions with longer training. The models were trained with the same recipe and learning curriculum. The gains from \ourmethod{} are maintained and even increased after 300 epochs. At 100 epochs (Table \ref{app:tab-300-ep-runs}), Transformer-XL Row achieves $60.80\%$ Top-1 Validation accuracy whereas \ourmethod{} improves on this by $0.6$ percentage points to $61.40\%$. Extending training to 300 epochs, Transformer-XL Row achieves $61.20\%$, a $1.19$ percentage points increase over 100 epochs; however, Transformer-XL \ourmethod{} Row, increases performance to $63.34\%$ which is a $2.14$ percentage points increase over the Transformer-XL Row model at the same epoch. These findings show that \ourmethod{} continues to provide consistent gains under extended optimization, supporting its robustness beyond the initial training regime.

\begin{table}[ht]
    \centering
    \caption{\textbf{\ourmethod{} widens the gap between standard models when training is extended.} When training is extended from 100 epochs to 300 epochs, \ourmethod{} still shows significant Top-1 Validation Accuracy gains over training without \ourmethod{}.}
    \label{app:tab-300-ep-runs}
    \begin{tabular}{llllcc}
    \toprule
    Model          & Dataset & Patch Order & RL         & Top-1 Validation Accuracy & Gain at 300 Epochs \\
    \midrule
    T-XL & IN1K    & Row         &      -      & 60.80       &\textcolor{ForestGreen}{+$1.19\, pp$} \\
    T-XL & IN1K    & Row         & \checkmark & 61.40       & \textcolor{ForestGreen}{+$1.94\, pp$} \\
    \bottomrule
    \end{tabular}
\end{table}

\subsection{Random and Static Permutation Results}
\label{app:random-and-static}

To better understand the effects of the \ourmethod{} learning process on the performance of the models we performed three additional runs with Transformer-XL on ImageNet-1k: 

\begin{enumerate}
    \item Using a fixed random permutation for the entire training
    \item Sampling a random permutation for every batch through all of training
    \item Using the final learn permutation of our best performing \ourmethod{} Transformer-XL model for all of training
\end{enumerate}

The results in Table \ref{app:tab-random-and-static} indicate that simply picking random patch orderings is not sufficient (and therefore, any one random ordering will also perform poorly). The structured exploration provided by \ourmethod{} is necessary to improve performance. A novel outcome of this exploration is that simply using a learned patch ordering is not sufficient for gains in performance; the guided exploration process introduced by \ourmethod{} is necessary for improved performance.

\begin{table}[ht]
    \centering
    \caption{\textbf{Top-1 validation accuracy on ImageNet-1K using Transformer-XL under four alternative permutation schemes.} Each variant controls how patch sequences are presented during training. Only \ourmethod{}’s reinforcement-learning curriculum (second row) adaptively explores the ordering space, leading to a large performance gain over all static or unguided random strategies.}
    \label{app:tab-random-and-static}
    \begin{tabular}{lcc}
    \toprule
    Permutation Strategy & RL & Top-1 Accuracy \\
    \midrule
    Row Major & - & 60.80 \\
    \ourmethod{} Row-major initialization & \checkmark & 61.40 \\
    One random permutation for all epochs & - & 53.10 \\
    Every batch random & - & 50.23 \\
    Best learned ordering & - & 53.07 \\
    \bottomrule
    \end{tabular}
\end{table}

\subsection{Results Without Flips}
\label{app:results-no-flips}

We additionally run experiments where the only data processing steps applied are a resize to $256\times256$ pixels and a center crop to $224\times224$ pixels with no flips at all. The results presented in Table \ref{app:tab-no-flip} show the effects observed in the main experiments are maintained albeit with different accuracy magnitudes. With no augmentations, the effect of \ourmethod{} is even greater. All tested patch orders exhibit performance gains when optimized by \ourmethod{}. The performance boost ranges from $1.58\%$ with row-major to $2.94\%$ with the Hilbert curve ordering. Diagonal and spiral orderings get moderate gains of $2.41\%$ and $2.47\%$ respectively, while column-major and snake orderings show more modest improvements of $1.67\%$ and $1.82\%$ respectively. These results confirm that \ourmethod{} consistently enhance task accuracy across the patch orderings even without extensive data augmentation.

\begin{table}[ht]
    \centering
    \caption{\textbf{REOrder consistently improves performance across all patch orders with no augmentations.} Top-1 validation accuracy on IN1K using different patch orders with and without our \ourmethod{} using minimal data processing (resize and center crop only). }
    \label{app:tab-no-flip}
    \begin{tabular}{llllcc}
    \toprule
    Model          & Dataset & Patch Order & RL         & Top-1 Accuracy & Difference \\
    \midrule
    Transformer-XL & IN1K    & Row         &            & 57.95       & \\
    Transformer-XL & IN1K    & Row         & \checkmark & 59.53       & \textcolor{ForestGreen}{+1.58\%} \\
    \midrule
    Transformer-XL & IN1K    & Column      &            & 57.33       & \\
    Transformer-XL & IN1K    & Column      & \checkmark & 59.01       & \textcolor{ForestGreen}{+1.67\%} \\
    \midrule
    Transformer-XL & IN1K    & Diagonal      &            & 52.79       & \\
    Transformer-XL & IN1K    & Diagonal      & \checkmark & 55.20       & \textcolor{ForestGreen}{+2.41\%} \\
    \midrule
    Transformer-XL & IN1K    & Hilbert      &            & 50.29       & \\
    Transformer-XL & IN1K    & Hilbert      & \checkmark & 53.22       & \textcolor{ForestGreen}{+2.94\%} \\
    \midrule
    Transformer-XL & IN1K    & Spiral      &            & 49.93       & \\
    Transformer-XL & IN1K    & Spiral      & \checkmark & 52.40       & \textcolor{ForestGreen}{+2.47\%} \\
    \midrule
    Transformer-XL & IN1K    & Snake      &            & 52.20       & \\
    Transformer-XL & IN1K    & Snake      & \checkmark & 54.02       & \textcolor{ForestGreen}{+1.82\%} \\
    \bottomrule
    \end{tabular}
\end{table}

\section{Dataset Licenses and References}
\label{app:licenses}

In this work, we use ImageNet-1K as provided by the 2012 ImageNet Large-Scale Visual Recognition Challenge \cite{dengImageNetLargescaleHierarchical2009} and the RGB version of Functional Map of the World \cite{christieFunctionalMapWorld2018}. ImageNet-1K is obtained from the \href{https://image-net.org/download.php}{official download portal} and Functional Map of the World is obtained from their \href{s3://spacenet-dataset/Hosted-Datasets/fmow/fmow-rgb/}{official AWS S3 bucket}.

ImageNet-1K is licensed non-commercial research and educational purposes only as described on their \href{https://www.image-net.org/}{homepage}. Functional Map of the World is licensed under a custom version of the Creative Commons license available on their \href{https://github.com/fMoW/dataset/blob/master/LICENSE}{GitHub}.

\section{Rasterization Orders}
\label{app:patch-order-trajectories}

Let an image be composed of $N = H \times W$ patches, arranged in a grid of height $H$ (number of rows) and width $W$ (number of columns). Each patch is identified by its zero-indexed 2D coordinates $(r, c)$, where $0 \le r < H$ and $0 \le c < W$. A rasterization order (or scan order) $\pi$ is a bijection that maps a 1D sequence index $k \in \{0, 1, \dots, N-1\}$ to the 2D coordinates of the $k$-th patch in the sequence. We denote this mapping as $\pi(k) = (r_k, c_k)$. In all cases besides row- and column-major, a direct formula for $(r_k, c_k)$ from $k$ is not provided; the order is algorithmically defined by their respective procedures.

\subsection{Row-Major Order}
\label{app:row_major}

Row-major order, also known as raster scan, is the most common default. Patches are scanned row by row from top to bottom. Within each row, patches are scanned from left to right.
The coordinates $(r_k, c_k)$ for the $k$-th patch are given by:
$$r_k = \left\lfloor \frac{k}{W} \right\rfloor$$
$$c_k = k \pmod W$$

\subsection{Column-Major Order}
\label{app:col_major}

In column-major order, patches are scanned column by column from left to right. Within each column, patches are scanned from top to bottom.
The coordinates $(r_k, c_k)$ for the $k$-th patch are given by:
$$c_k = \left\lfloor \frac{k}{H} \right\rfloor$$
$$r_k = k \pmod H$$

\subsection{Hilbert Curve Order}
\label{app:hilbert}

The Hilbert curve is a continuous fractal space-filling curve. Ordering patches according to a Hilbert curve aims to preserve locality, meaning that patches close in the 1D sequence are often (but not always perfectly) close in the 2D grid.
The coordinates $(r_k, c_k)$ for the $k$-th patch are given by:
$$\pi(k) = (r_k, c_k) = \mathcal{H}_{H,W}(k)$$
where $\mathcal{H}_{H,W}(k)$ denotes the coordinates of the $k$-th point generated by a Hilbert curve algorithm adapted for an $H \times W$ grid.

\subsection{Spiral Order}
\label{app:spiral}
In a spiral scan, patches are ordered in an outward spiral path, typically starting from a corner patch, e.g., $(0,0)$. The path moves along the perimeter of increasingly larger (or remaining) rectangular sections of the grid.
The coordinates $(r_k, c_k)$ for the $k$-th patch are:
$$\pi(k) = (r_k, c_k)$$
These are the $k$-th unique coordinates visited by a path that starts at $(0,0)$ and spirals outwards. The path typically moves along segments of decreasing lengths (or until a boundary or previously visited cell is encountered) in a sequence of cardinal directions (e.g., right, down, left, up, then right again with a shorter segment, etc.), effectively tracing successive perimeters of nested rectangles.

\subsection{Diagonal Order (Anti-diagonal Scan)}
\label{app:diagonal}

In a diagonal scan, patches are ordered along anti-diagonals, which are lines where the sum of the row and column indices ($r+c$) is constant. These anti-diagonals are typically scanned in increasing order of this sum, starting from $r+c=0$. Within each anti-diagonal, patches are typically ordered by increasing row index $r$ (or, alternatively, by increasing column index $c$).
The coordinates $(r_k, c_k)$ for the $k$-th patch are:
$$\pi(k) = (r_k, c_k)$$
such that $(r_k, c_k)$ is the $k$-th patch when all patches $(r,c)$ are ordered lexicographically according to the tuple $(s, r')$, where $s = r+c$ is the anti-diagonal index and $r' = r$ is the row index. Patches with smaller anti-diagonal sums $s$ come first. For patches on the same anti-diagonal (i.e., with the same $s$), those with a smaller row index $r'$ come first.

\subsection{Snake Order}
\label{app:snake}
This scan traverses patches along anti-diagonals (where the sum of row and column indices, $s = r+c$, is constant). The anti-diagonals are processed in increasing order of $s$. The direction of traversal along each anti-diagonal alternates. For example, for even $s$, patches are visited by increasing column index $c$, and for odd $s$, by decreasing column index $c$.

Let $s_k = r_k + c_k$ be the anti-diagonal index for the $k$-th patch $\pi(k) = (r_k, c_k)$.
The sequence of patches $\pi(k)$ is generated by iterating $s$ from $0$ to $H+W-2$. For each $s$:
\begin{enumerate}
    \item Define the set of coordinates on the anti-diagonal: $S_s = \{ (r,c) \mid r+c=s, 0 \le r < H, 0 \le c < W \}$.
    \item Order the coordinates in $S_s$. For instance, by increasing column index $c$: $P_s = [(r_0, c_0), (r_1, c_1), \dots, (r_m, c_m)]$ where $c_0 < c_1 < \dots < c_m$.
    \item If $s$ is odd (or based on an alternating flag), reverse the order of $P_s$.
    \item Append the (potentially reversed) $P_s$ to the overall sequence.
\end{enumerate}

\section{Attention Patterns for Tested Models}
\label{app:attention-patterns}

\begin{figure}[h]
    \centering
    \includegraphics[width=\linewidth]{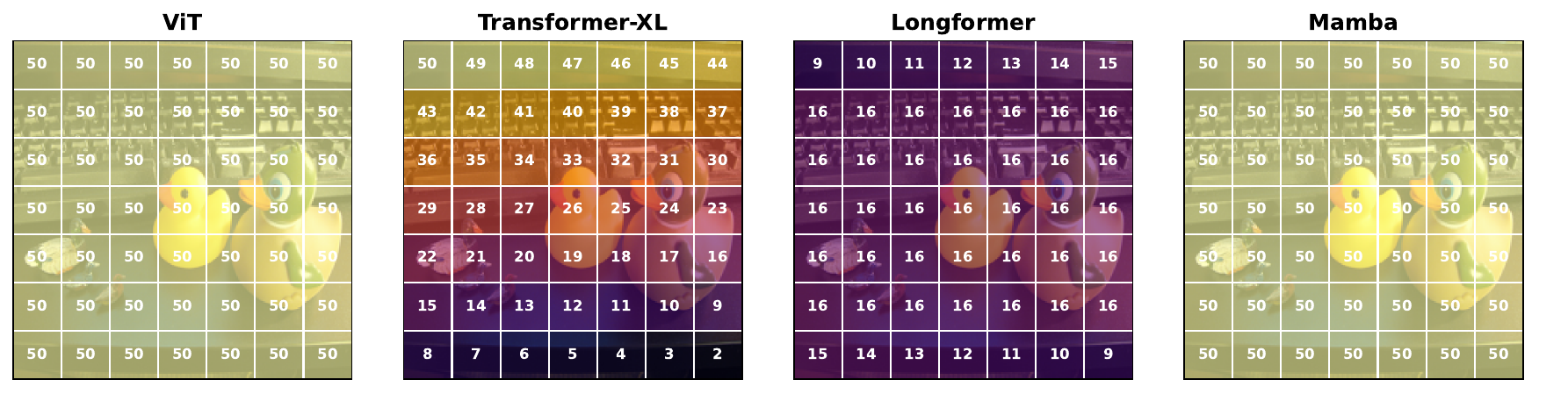}
    \caption{
    \textbf{Token-level attention coverage across different model architectures.} Each grid cell represents a patch in a 224$\times$224 input image split into 49 non-overlapping 32$\times$32 patches, plus a \texttt{[CLS]} token. Numbers indicate how many tokens attend to each patch. ViT and Mamba exhibit full attention to all patches. In contrast, Transformer-XL’s causal attention and Longformer’s local attention restrict the number of tokens that can attend to each patch, leading to a strong asymmetry and localized attention, respectively.
    }
    \label{fig:patch-attn-figure}
\end{figure}

To produce Figure~\ref{fig:patch-attn-figure}, we visualized how many tokens attend to each patch in a 224$\times$224 image divided into 49 patches (plus a \texttt{[CLS]} token), using the following methods for each model.

\paragraph{ViT and Transformer-XL:} We extracted the raw attention weights across all layers and heads during a forward pass. After computing the element-wise maximum across layers and heads, we obtained a binary $N \times N$ matrix indicating whether token $i$ attends to token $j$. Summing over rows yields how many tokens attend to each patch.

\paragraph{Longformer:} Due to its local attention structure, we reconstructed a dense $N \times N$ attention matrix by extracting local and (if applicable) global attention indices. We then counted how many tokens had access to each patch through these sparse connections.

\paragraph{Mamba:} Since Mamba does not use attention, we used a gradient-based saliency method. We computed the gradient of the $L_2$ norm of each output token with respect to the input embeddings. This yielded a sensitivity matrix indicating the influence of each input token on each output. Thresholding non-zero entries allows us to analogously count how many tokens ``attend'' to each patch.

These results reveal how the structural design of each model affects its ability to aggregate spatial information. ViT and Mamba attend to all patches uniformly, while Transformer-XL's causal structure and Longformer's locality lead to uneven and limited attention coverage. These patterns explain their respective sensitivities to input token ordering.

\section{Policy Evolution During Training}
\label{app:policy-evolution}

\begin{figure}[h]
    \centering
    \includegraphics[width=0.9\linewidth]{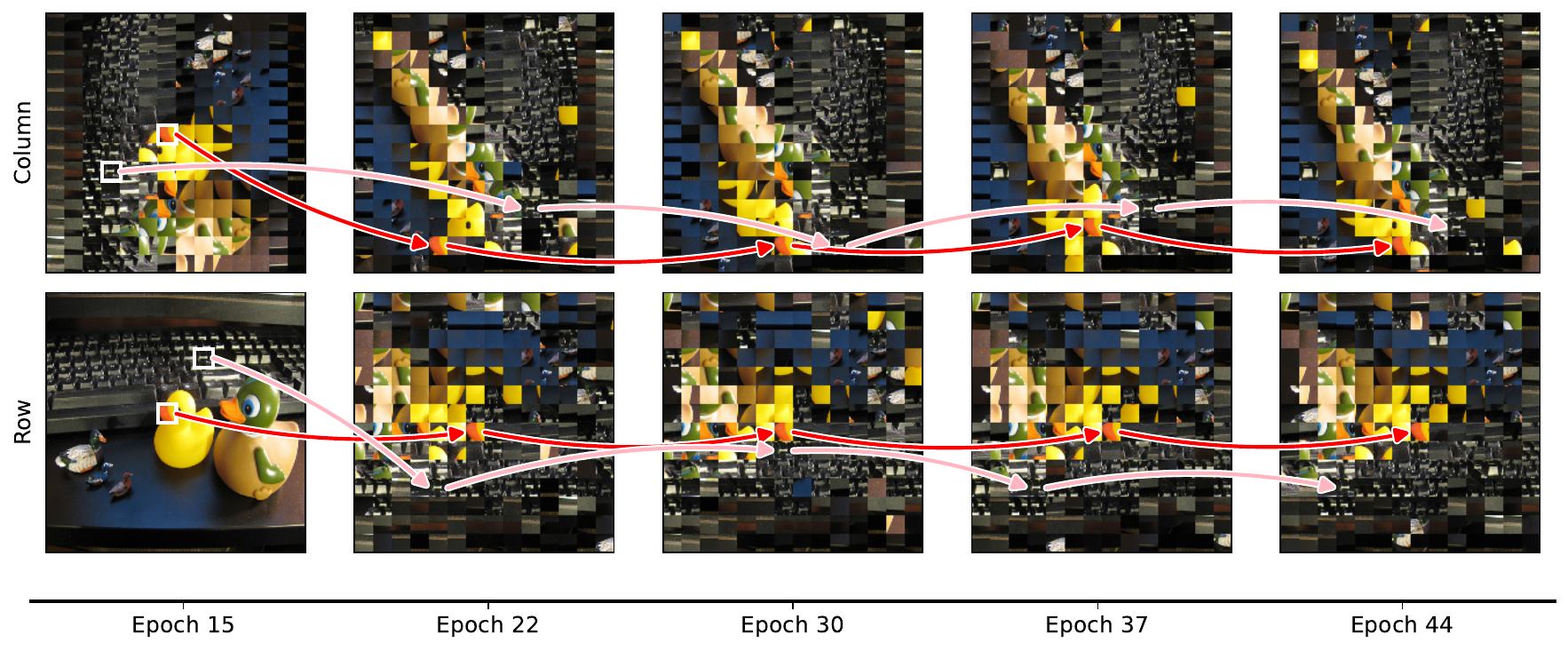}
    \caption{\small \textbf{Mamba observes patches most related to the class label move to the end of the sequence.} Mamba is trained with column- (top) and row-major (bottom) patch orderings and optimized with \ourmethod{}. The image is of the class ``keyboard.'' We track two patches over the course of the policy curriculum: a keyboard key (light red arrow) and an irrelevant orange beak (dark red arrow). As training progresses, the keyboard-related patches shift into the final indices of the sequence.}
    \label{fig:policy-evolution-mamba}
\end{figure}

\begin{figure}
    \centering
    \includegraphics[width=0.9\linewidth]{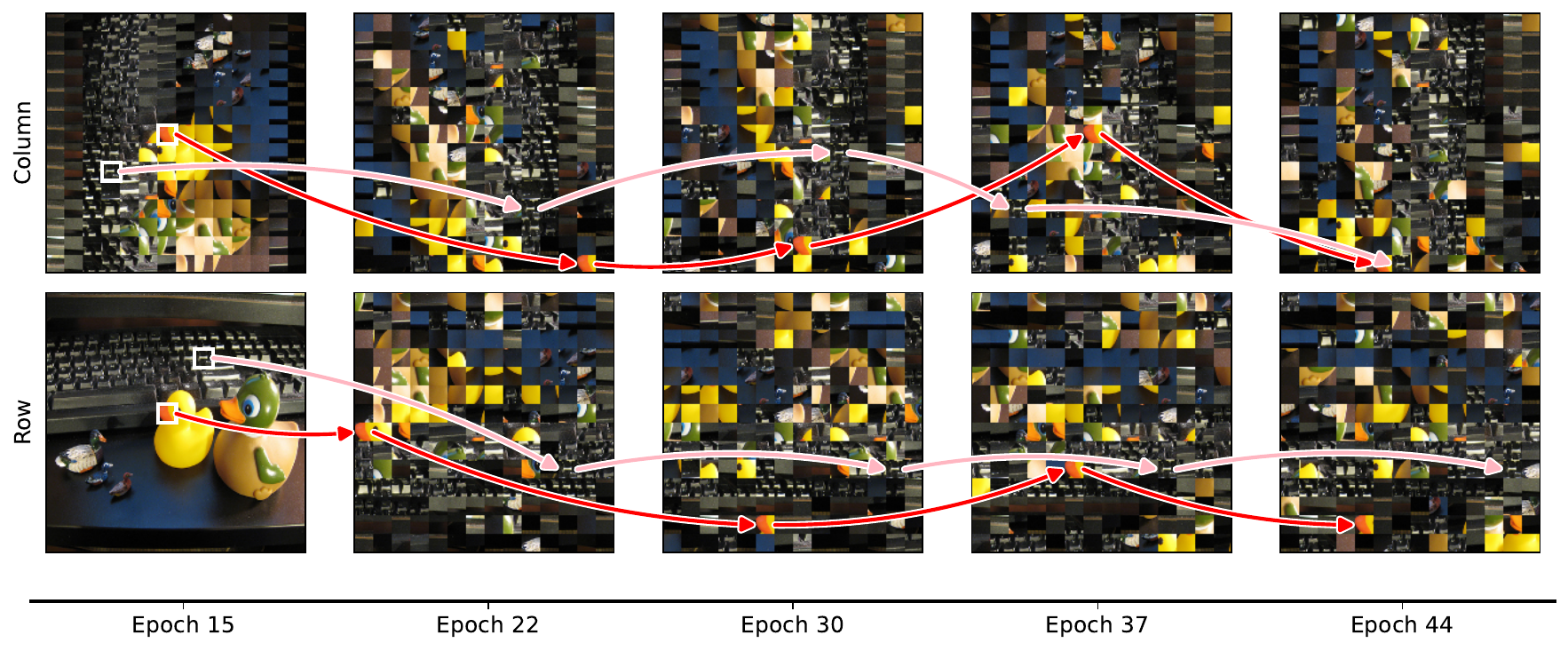}
    \caption{\small \textbf{Transformer-XL observes patches most related to the class label move to the end of the sequence.} Transformer-XL is initialized with column- (top) and row-major (bottom) patch orderings and optimized with \ourmethod{}. The image is of the class ``keyboard.'' We track two patches over the course of the policy curriculum: a keyboard key (light red arrow) and an irrelevant orange beak (dark red arrow). As training progresses, the keyboard-related patches shift into the final indices of the sequence.}
    \label{fig:policy-evolution-transformerxl}
\end{figure}

The \ourmethod{} policy learned distinct and consistent rearrangements of image patches that reflect both dataset structure and model-specific inductive biases. As seen in the “keyboard” example from Figure~\ref{fig:patch-order-impact-rl}, \ourmethod{} progressively shifts semantically important patches toward specific sequence positions, indicating that the learned permutations encode spatial continuity and capture architectural preferences in how visual information should be processed.

To quantify these effects, we analyzed how \ourmethod{} repositions patches originating from the central region of an image relative to their initial raster order. Both ImageNet-1K and FMoW exhibit a pronounced center bias, where salient content typically lies near the image’s center. We observed that \ourmethod{} with different long-sequence architectures exploit this bias in distinct ways.

Mamba consistently front-loads central patches, shifting them earlier in the sequence (mean shift of $+6.8$ positions). This ordering aligns with Mamba’s recurrent state-space mechanism as processing salient central regions early allows the model to form a strong initial hidden state that informs subsequent token processing.

Transformer-XL and Longformer exhibit the opposite tendency, back-loading central patches (average shifts of $-2.2$ and $-20.4$ positions, respectively). For Transformer-XL, this allows these tokens to utilize information from the entire image rather than only from half of it. For Longformer, the attention window accumulates context over the sequence which positioning maximizes contextual support from earlier tokens for the salient tokens.

This contrast demonstrates that \ourmethod{} does not converge to a trivial or dataset-wide ordering. Instead, it learns model-dependent strategies that align spatial inductive biases with sequence-processing dynamics, revealing that ordering optimization can adapt meaningfully to the architecture’s internal computation pattern.


\end{document}